\tikzset{
  treenode/.style = {align=center, inner sep=0pt, text centered,
    font=\sffamily},
  arn_n/.style = {treenode, circle, white, font=\sffamily\bfseries, draw=black,
    fill=black, text width=1.5em},
  arn_r/.style = {treenode, circle, red, draw=red, 
    text width=1.8em, very thick},
  arn_x/.style = {treenode, rectangle, draw=black,
    minimum width=0.5em, minimum height=0.5em}
}
\definecolor{antiquebrass}{rgb}{0.8, 0.58, 0.46}
\newtheorem{theorem}{Theorem}
\newtheorem{example}{Example}
\newtheorem{proof}{Proof}
\begin{document}

\begin{frontmatter}

\title{Affinity-Based Hierarchical Learning of Dependent Concepts for Human Activity Recognition}

\author{
Aomar Osmani\textsuperscript{1}
and Massinissa Hamidi\textsuperscript{1}(\Letter)
and
Pegah Alizadeh\textsuperscript{2}
}
\address{
\textsuperscript{1} LIPN-UMR CNRS 7030, Univ. Sorbonne Paris Nord, Villetaneuse, France\\
\textsuperscript{2} Léonard de Vinci Pôle Universitaire, Research
Center, 92 916 Paris, La Défense\\
{\tt \{ao,hamidi\}@lipn.univ-paris13.fr}, {\tt pegah.alizadeh@devinci.fr}\\
}




\begin{abstract}
In multi-class classification tasks, like human activity recognition, it is often assumed that classes are separable. In real applications, this assumption becomes strong and generates inconsistencies. Besides, the most commonly used approach is to learn classes one-by-one against the others. This computational simplification principle introduces strong inductive biases on the learned theories. In fact, the natural connections among some classes, and not others, deserve to be taken into account. In this paper, we show that the organization of overlapping classes (multiple inheritances) into hierarchies considerably improves classification performances. This is particularly true in the case of activity recognition tasks featured in the SHL dataset.
After theoretically showing the exponential complexity of possible class hierarchies, we propose an approach based on transfer affinity among the classes to determine an optimal hierarchy for the learning process.
Extensive experiments show improved performances and a reduction in the number of examples needed to learn.
\end{abstract}
\begin{keyword}
Activity recognition \sep Dependent concepts \sep Meta-modeling.
\end{keyword}

\end{frontmatter}


\section{Introduction}
\label{sec:introduction}
Many real-world applications considered in machine learning exhibit dependencies among the various to-be-learned concepts (or classes)~\cite{essaidi2015learning,silla2011survey}.
This is particularly the case in human activity recognition from wearable sensor deployments which constitutes the main focus of our paper. This problem is two-folds: the high volume of accumulated data and the criteria selection optimization.  
For instance, are the criteria used to distinguish between the activities (concepts) {\it running} and {\it walking} the same as those used to distinguish between \textit{driving a car} and \textit{being in a bus}?
what about distinguishing each individual activity against the remaining ones taken as a whole?
Similarly, during the annotation process, when should someone consider that {\it walking} at a higher pace corresponds actually to {\it running}?
These questions naturally arise in the case of the SHL dataset~\cite{gjoreski2018university} which exhibits such dependencies. The considered activities in this dataset are difficult to separate due to the existence of many overlaps among certain activities. Some of the important causes for these overlaps are: (1) the on-body sensors deployments featured by this dataset, due to sensors coverage overlaps, tend to capture movements that are not necessarily related to a unique activity.
Authors in~\cite{hamidi2020data}, for example, have exhibited such overlaps; (2) The difficulty of data annotation during data collection conducted in real-world conditions.
For instance, the annotation issues can include the time-shift of a label with respect to the activity~\cite{stikic2009activity}, as well as wrong or missing labels~\cite{nguyen2014robust}.
Similarly, long lines of research in computer vision~\cite{taran2019impact} and time-series analysis~\cite{stikic2009activity,nguyen2014robust} raised these issues which hinder the development and large-scale adoption of these applications.

To solve these problems, we propose an original approach for structuring the considered concepts into hierarchies in a way that very similar concepts are grouped together and tackled by specialized classifiers.
The idea is that classifications at different levels of the hierarchy may rely on different features, or different combinations of the same features~\cite{zhou2011hierarchical}.
Indeed, many real-world classification problems are naturally cast as hierarchical classification problems~\cite{cai2004hierarchical,wehrmann2018hierarchical,yao2019hierarchically,zhou2011hierarchical}.
A work on the semantic relationships among the categories in a hierarchical structure shows that they are usually of the type \textit{generalization-specialization}~\cite{zhou2011hierarchical}. In other words, the lower-level categories are supposed to have the same general properties as the higher-level categories plus additional more specific properties.
The problem at hand is twice difficult as we have to, first, find the most appropriate hierarchical structure and, second, find optimal learners assigned to the nodes of the hierarchical structure.

We propose a data-driven approach to structure the considered concepts in a bottom-up approach. We start by computing the affinities and dependencies that exist among the concepts and fuse hierarchically the closest concepts with each other.
We leverage for this a powerful technique based on transfer which showed interesting empirical properties in various domains~\cite{zamir2018taskonomy,peters2019tune}.
Taking a bottom-up approach allows us to leverage learning the complete hierarchy (including the classifiers assigned to each non-leaf node) incrementally by reusing what was learned on the way.
Our contributions are as follows:
(1) we propose a theoretical calculation for computing the total number of tree hierarchical combinations (the search space for the optimal solution) based on the given number of concepts; 
(2) we propose an approach based on transfer affinity to determine an optimal organization of the concepts that improves both learning performances and accelerates the learning process;
(3) extensive experiments show the effectiveness of organizing the learning process. We noticeably get a substantial improvement of recognition performances over a baseline which uses a flat classification setting;
(4) we perform a comprehensive comparative analysis of the various stages of our approach which raises interesting questions about concept dependencies and the required amount of supervision.
\section{Problem Statement}\label{sec:problem-statement}
In this section, we briefly review the problem of hierarchical structuring of  the concepts in terms of formulation and background. We then provide a complexity analysis of the problem size and its search space.
\subsection{Problem Formulation and Background}\label{sec:problem-formulation-and-background}
Let $\mathcal{X} \subset \mathbb{R}^n$ be the inputs vector~\footnote{In our case, we select several body-motion modalities to be included in our experiments, among the 16 input modalities of the original dataset: \textit{accelerometer}, \textit{gyroscope}, etc. 
Segmentation and processing details are detailed in experimental part.} and let $\mathcal{C}$ be the set of atomic concepts (or labels) to learn.
The main idea of this paper comes from the fact that the concepts to be learned are not totally independent, thus grouping some concepts to learn them against the others using implicit biases considerably improves the quality of learning for each concept. 
The main problem is to find the best structure of concepts groups to be learned in order to optimize the learning of each atomic concept.
For this we follow the three dimensions setting defined in~\cite{kosmopoulos2015evaluation}, and we consider: (1) single-label classification as opposed to multi-label classification; (2) the type of hierarchy (or structure) to be trees as opposed to directed acyclic graphs; (3) instances that have to be classified into leafs, i.e. mandatory leaf node prediction~\cite{silla2011survey}, as opposed to the setting where instances can be classified into any node of the hierarchy (early stopping).

A tree hierarchy  organizes  the  class  labels  into  a  tree-like structure to represent a kind of "IS-A" relationship  between labels. Specifically,~\cite{kosmopoulos2015evaluation}  points out  that the  properties  of the "IS-A" relationship  can be described  as asymmetry, anti-reflexivity and transitivity~\cite{silla2011survey}.  We  define  a tree  as  a  pair $(\mathcal{C}, \prec)$, where $\mathcal{C}$ is the set of class labels and "$\prec$" denotes the "IS-A" relationship.

Let $\{(x_1, c_1), \dots, (x_N, c_N)\} \overset{\text{i.i.d.}}{\sim} X,C$ be a set of training examples, where $X$ and $C$ are two random variables taking values in $\mathcal{X} \times \mathcal{C}$, respectively. Each $x_k \in \mathcal{X}$ and each $c_k \in \mathcal{C}$.
Our goal is to learn a classification function $f: \mathcal{X} \xrightarrow{} \mathcal{C}$ that attains a small classification error.
In this paper, we associate each node $i$ with a classifier $\mathcal{M}_i$, and focus on classifiers $f(x)$ that are parameterized by $\mathcal{M}_1, \dots, \mathcal{M}_m$ through the following recursive procedure~\cite{zhou2011hierarchical} (check Fig.~\ref{fig:high-order-transfer}):
\begin{equation}
\small
  f(x)=\begin{cases}
    \textbf{initialize } i:=0\\
    \textbf{while } (Child(i) \text{ is not empty})
    \text{\;\;\;\;\;} i:= \operatorname*{argmax}_{j \in Child(i)} \mathcal{M}_j(x)\\
    \textbf{return } i \;\;\; \%\text{$Child(i)$ is the set of children for the node $i$}
  \end{cases}
\end{equation}
In the case of the SHL dataset, for instance, learning \textit{train} and \textit{subway} or \textit{car} and \textit{bus} before learning each concept alone gives better results.
As an advantage, considering these classes paired together as opposed to the flat classification setting leads to significant degradation of recognition performances as demonstrated in some works around the SHL dataset~\cite{wang2018summary}.
In contrast, organizing the various concepts into a tree-like structure, inspired by domain expertise, demonstrated significant gains in terms of recognition performances in the context of the SHL challenge~\cite{nakamura2018multi} and activity recognition in general~\cite{samie2020hierarchical,scheurer2020using}.

Designing such structures is of utmost importance but hard because it involves optimizing the structure as well as learning the weights of the classifiers attached to the nodes of that structure (see Sec.~\ref{sec:complexity-analysis}).
Our goal is then to determine an optimal structure of classes that can facilitate (improve and accelerate) learning of the whole concepts.

\subsection{Search Space Size: Complexity Analysis}
\label{sec:complexity-analysis}
A naive approach is to generate the lattice structure of concepts groups and to choose the tree hierarchies which give the best accuracy of atomic concepts. In practice, this is not doable because of the exponential (in the number of leaf nodes) number of possible trees.
We propose a recurrence relation involving binomial coefficients for calculating the total number of tree hierarchies for $K$ different concepts (class labels). 
\begin{example}
Assume we have $3$ various concepts, and we are interested in counting the total number of hierarchies for classifying these concepts. We consider that we have three classes namely $c_1, c_2$ and $c_3$, there exist $4$ different tree hierarchies for learning the classification problem as following: (1) ($c_1 c_2 c_3$) the tree has one level and the learning process takes one step. Three concepts are learned while each concept is learned separately from the others (flat classification), (2) (($c_1c_2$)$c_3$) the tree has two levels and the learning process takes two steps: at the first level, it learns two concepts (atomic $c_3$ and two atomics $c_1$ and $c_2$ together). At the second level it learns separately the two joined concepts $c_1$ and $c_2$ of the first level, etc and (3) ($c_1(c_2$ $c_3$)) and (4) (($c_1c_3$)$c_2$).
\end{example}

\begin{theorem}
Let $L(K)$ be the total number of trees for the given $K$ number of concepts. The total number of trees for $K+1$ concepts satisfies the following recurrence relation:
    $\label{equation:L-number}
    L(K+1) = {K \choose K-1} L(K)L(1) +
    2 \sum_{i=0}^{K-2} {K \choose i} L(i+1) L(K-i)$.
\label{theorem}
\end{theorem}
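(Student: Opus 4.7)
My plan is to prove the recurrence by a direct combinatorial decomposition of trees on $K+1$ labelled leaves, conditioning on the child subtree of the root that contains one distinguished leaf.

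First I would fix a distinguished leaf, say $c_{K+1}$. Every admissible tree $T$ on $\{c_{1},\ldots,c_{K+1}\}$ has a root with at least two children, and exactly one of those children -- call it $S$ -- contains $c_{K+1}$ among its leaves. Let $s$ denote the number of leaves of $S$; since the root has at least two children, $1\le s\le K$. The tree $T$ is uniquely specified by three independent choices: (i) the $s-1$ additional leaves from $\{c_{1},\ldots,c_{K}\}$ that join $c_{K+1}$ inside $S$, giving $\binom{K}{s-1}$ options; (ii) the tree structure on $S$, giving $L(s)$ options; and (iii) a forest on the remaining $K+1-s$ leaves (a partition into nonempty blocks, each decorated with a tree from our family), which forms the siblings of $S$ under the root.

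Next I would establish the auxiliary identity that the number of such forests on $n$ leaves, denote it $F(n)$, satisfies $F(n)=2L(n)$ for $n\ge 2$, while $F(1)=1$. This follows immediately from the definition of $L$: the single-block partition of $[n]$ contributes $L(n)$ (one tree on all $n$ leaves), and the partitions into at least two blocks correspond bijectively to admissible trees on $n$ leaves whose root has $\ge 2$ children, contributing another $L(n)$. For $n=1$ there are no partitions with $\ge 2$ blocks, so $F(1)=L(1)=1$.

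Assembling these facts gives $L(K+1)=\sum_{s=1}^{K}\binom{K}{s-1}L(s)\,F(K+1-s)$. The boundary term $s=K$ uses $F(1)=1$ and contributes $\binom{K}{K-1}L(K)L(1)$; the terms $1\le s\le K-1$ use $F(K+1-s)=2L(K+1-s)$. Re-indexing the remaining sum by $i=s-1$ (so that $s=i+1$ and $K+1-s=K-i$, with $i$ ranging over $0,\ldots,K-2$) yields exactly the stated recurrence. The delicate step is the auxiliary identity $F(n)=2L(n)$ and in particular its failure at $n=1$: it is precisely because $F(1)=1$ rather than $2L(1)=2$ that the first term of the recurrence is separated out and appears with coefficient $1$ instead of $2$. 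Everything else reduces to routine bookkeeping.
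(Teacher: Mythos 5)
Your decomposition is correct and is essentially the paper's own argument: both condition on the child of the root containing the distinguished concept $c_{K+1}$ (the paper's $\gamma$), choose its $s-1$ companions in $\binom{K}{s-1}$ ways, and account for the remaining $K+1-s$ leaves with a factor that is $2L(K+1-s)$ except in the boundary case. Your auxiliary identity $F(n)=2L(n)$ for $n\ge 2$ versus $F(1)=1$ is simply a rigorous rendering of the paper's informal ``multiply by $2$'' (one sibling subtree versus several siblings at the first level), and it correctly explains why the $s=K$ term is split off with coefficient $1$.
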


\begin{proof} 
It can be explained by observing that, for $K+1$ concepts containing $K$ existed concepts $c_1, \cdots c_K$ and a new added concept $\gamma$, we can produce the first level trees combinations as below. Notice that each atomic element $o$ can be one of the $c_1, \cdots c_K$ concepts. In order to compute the total number of trees combinations, we show what is the number of tree combinations by assigning the $K$ concepts to each item:
\begin{itemize}
    \item $(\gamma(\overbrace{o\cdots o}^{K \text{concepts}}))$: the number of trees combinations by taking the concept labels into the account are: ${K \choose 0} L(1) \times 2 \times L(K)$; the reason for multiplying the number of trees combinations for $K$ concepts to $2$ is because while the left side contains an atomic $\gamma$ concept, there are two choices for the right side of the tree in the first level: either we compute the total number of trees for $K$ concepts from the first level or we keep the first level as a $\overbrace{o\cdots o}^{K \text{concepts}}$ atomics and keep all $K$ concepts together, then continue the number of $K$ trees combinations from the second level of the tree. 
    \item $((\gamma o)(\overbrace{o\cdots o}^{K-1 \text{concepts}}))$: similar to the previous part we have ${K \choose 1} L(2) \times 2 \times L(K-1)$ trees combinations by taking the concepts labels into the account. ${K \choose 1}$ indicates the number of combinations for choosing a concept from the $K$ concept and put it with the new concept separately. While $L(2)$ is the number of trees combinations for the left side of tree separated with the new concept $\gamma$.
    \item $((\gamma oo)(\overbrace{o\cdots o}^{K-2 \text{concepts}}))$, $\cdots$
    \item $((\gamma \overbrace{o\cdots o}^{K-1 \text{concepts}})o)$: ${K \choose K-1} L(K) L(1)$ in this special part, we follow the same formula except the single concept in the right side has only one possible combination in the first level equal to $L(1)$.
\end{itemize}
All in all, the sum of these items calculates the total number of tree hierarchies for $K+1$ concepts.
\end{proof}
The first few number of total number of trees combinations for $1, 2, 3,4, 5, 6$, $7, 8, 9,10, \cdots$ concepts are: $1~, ~1, ~4, ~26, ~236, ~2752, ~39208, ~660032, ~12818912,~$ $282137824,\cdots $. 
In the case of the SHL dataset that we use in the empirical evaluation, we have $8$ different concepts and thus, the number of different types of hierarchies for this case is $L(8) = 660,032$.
\section{Proposed Approach}\label{sec:proposed-approach}
Our goals are to: (i) organize the considered concepts into hierarchies such that the learning process accounts for the dependencies existed among these concepts; (ii) characterize optimal classifiers that are associated to each non-leaf node of the hierarchies.
Structuring the concepts can be performed using two different approaches: a \textbf{top-down} approach where we seek to decompose the learning process; and a \textbf{bottom-up} approach where the specialized models are grouped together based on their affinities.
Our approach takes the latter direction and constructs hierarchies based on the similarities between concepts. This is because, an hierarchical approach as a bottom-up method is efficient in the case of high volume SHL data-sets.
In this section, we detail the different parts of our approach which are illustrated in Fig.~\ref{fig:proposed-hierarchical-approach}.
In the rest of this section, we introduce the three stages of our approach in detail: {\it Concept similarity analysis}, \textit{Hierarchy derivation}, and \textit{Hierarchy refinement}.
\begin{figure*}[h!]
\captionsetup[subfigure]{labelformat=empty}
\centering
\sffamily
\subfloat[]{
    \def\svgwidth{2.6\columnwidth}
    \resizebox{110mm}{!}{
       \input{./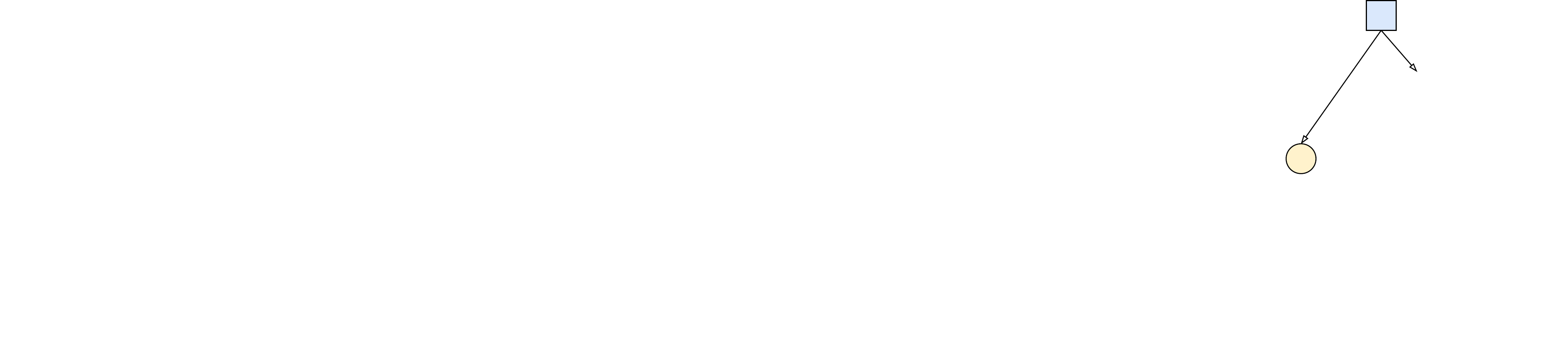_tex}
    }
}%
\caption{
    \small
    Our solution involves several repetitions of 3 main steps: (1) Concept similarity analysis:
    encoders are trained to output, for each source concept, an appropriate representation which is then fine-tuned to serve target concepts. Affinity scores are depicted by the arrows between concepts (the thicker the arrow, the higher the affinity score).
    (2) Hierarchy derivation:
    based on the obtained affinity scores, a hierarchy is derived using an agglomerate approach.
    (3) Hierarchy refinement: each non-leaf node of the derived hierarchy is assigned with a model that encompasses the  appropriate representation as well as an ERM which is optimized to separate the considered concepts.
}
\label{fig:proposed-hierarchical-approach}
\end{figure*}

\subsection{Concept Similarity (Affinity) Analysis}\label{sec:concept-affinity-analysis}
In our \textbf{bottom-up} approach we leverage transferability and dependency among concepts as a measure of similarity.
Besides the nice empirical properties of this measure (explained in the \textit{Properties} paragraph below), the argument behind it is to reuse what has been learned so far at the lower levels of the hierarchies.
Indeed, we leverage the models that we learned during this step and use them with few additional adjustments in the final hierarchical learning setting.
\paragraph{Transfer-based affinity}
Given the set of concepts $\mathcal{C}$, we compute during this step an affinity matrix that captures the notion of transferability and similarity, among the concepts.
For this, we first compute for each concept $c_i \in \mathcal{C}$ an encoder $f_{\theta}^{c_i}$ (parameterized by $\theta$) that learns to map the $c_i$ labeled inputs, to $\mathcal{Z}_{c_i}$.
Learning the encoder's parameters consists in minimizing the reconstruction error, satisfying the following optimization~\cite{vincent2010stacked}: $\operatorname*{argmin}_{\theta, \theta'} \mathbb{E}_{x, c \sim X, C | c=c_i} \mathcal{L}(g_{\theta'}^{c_i}(f_{\theta}^{c_i}(x)), x)$, where $g_{\theta'}^{c_i}$ is a decoder (parameterized by $\theta'$) which maps back the learned representation into the original inputs space.
We propose to leverage the learned encoder, for a given concept $c_i$, to compute affinities with other concepts via fine-tuning of the learned representation.
Precisely, we fine-tune the encoder $f_{c_i}^{\theta}$ to account for a target concept $c_j \in \mathcal{C}$. This process consists, similarly, in minimizing the reconstruction error, however rather than using the decoder $g_{\theta'}^{c_i}$ learned above, we design a genuine decoder $g_{\theta'}^{c_j}$ that we learn from the scratch.
The corresponding objective function is $\operatorname*{argmin}_{\theta, \theta'} \mathbb{E}_{x, c \sim X, C | c=c_j} \mathcal{L}(g_{\theta'}^{c_j}(f_{\theta}^{c_i}(x)), x)$.
We use the performance of this step as a \textit{similarity score} from $c_i$ to $c_j$ which we denote by $p_{c_i \xrightarrow{} c_j} \in [0, 1]$.
We refer to the number of examples belonging to the concept $c_j$ used during fine-tuning as the \textit{supervision budget}, denoted as $b$, which is used to index a given measure of similarity. It allows us to have an additional indicator as to the similarity between the considered concepts.
The final similarity score is computed as $\frac{\alpha \cdot p_{c_i \xrightarrow{} c_j} + \beta \cdot b}{\alpha + \beta}$.
We set $\alpha$ and $\beta$ to be equal to $\frac{1}{2}$.
\paragraph{Properties}
In many applications, e.g. computer-vision~\cite{zamir2018taskonomy} and natural language processing~\cite{peters2019tune}, several variants of the transfer-based similarity measure have been shown empirically to improve (i) the \textbf{quality} of transferred models (wins against fully supervised models), (ii) the \textbf{gains}, i.e. win rate against a network trained from scratch using the same training data as transfer networks', and more importantly (iii) the \textbf{universality} of the resulting structure. Indeed, the affinities based on transferability are stable despite the variations of a big corpus of hyperparameters.
We provide empirical evidence (Sec.~\ref{sec:evaluation:affinity-analysis}) of the appropriateness of the transfer-based affinity measure for the separability of the similar concepts and the difficulty to separate concepts that exhibit low similarity scores.
\subsection{Hierarchy Derivation}\label{sec:hierarchy-derivation}
Given the set of \textit{affinity scores} obtained previously, we derive the most appropriate hierarchy, following an agglomerative clustering method combined with some additional constraints.
The agglomerative clustering method proceeds by a series of successive fusions of the concepts into groups and results in a structure represented by a two-dimensional diagram known as a dendrogram.
It works by (1) forming groups of concepts that are close enough and (2) updating the affinity scores based on the newly formed groups.
This process is defined by the recurrence formula proposed by~\cite{lance1967general}. If defines a distance between a group of concepts ($k$) and a group formed by fusing $i$ and $j$ groups ($ij$) as $d_{k(ij)} = \alpha_i d_{ki} + \alpha_j d_{kj} + \beta d_{ij} + \gamma |d_{ki} - d_{kj}|$, where $d_{ij}$ is the distance between two groups $i$ and $j$.
By varying the parameter values $\alpha_i, \alpha_j, \beta$, and $\gamma$, we expect to get clustering schemes with various characteristics.

In addition to the above updating process, we propose additional constraints to refine further the hierarchy derivation stage.
Given the dendrogram produced by the agglomerative method above, we define an \textit{affinity threshold} $\tau$ such that if the distance at a given node is $d_{ij} \geq \tau$, then we merge the nodes to form a unique subtree.
In addition, as we keep track of the quantities of data used to train and fine-tune the encoders during the transfer-based affinity analysis stage, this indicator is exploited to inform us as to which nodes to merge.
Let $\mathcal{T}$ be the derived hierarchy (tree) and let $t$ indexes the non-leaf or internal nodes.
The leafs of the hierarchy correspond to the considered concepts.
For any non-leaf node $t$, we associate a model $\mathcal{M}_t$ that encompasses (1) an encoder (denoted in the following simply by $\mathcal{Z}_t$ in order to focus on the representation) that maps inputs $X$ to representations $\mathcal{Z}_t$ and (2) an ERM (Empirical Risk Minimizer)~\cite{vapnik1992principles} $f_t$ (such as support vector machines SVMs) that outputs decision boundaries based on the representations produced by the encoder.

\subsection{Hierarchy Refinement}\label{sec:hierarchy-refinement}
After explaining the hierarchy derivation process, we will discuss: (1) which representations are used in each individual model; and (2) how each individual model (including the representation and the ERM weights) is adjusted to account for both the local errors and also those of the hierarchy as a whole.

\paragraph{Which representations to use?}
The question discussed here is related to the encoders to be used in each non-leaf node.
For any non-leaf node $t$ we distinguish two cases: (i) all its children are leafs; (ii) it has at least one non-leaf node.
In the first case, the final considered ERM representation, associated with the non-leaf node, is the representation learned in the concept affinity analysis step (first-order transfer-based affinity).
In the second case, we can either fuse the nodes (for example, in a case of classification between 3 concepts, we get all 3 together rather than, first \{1\} vs. \{2,3\}, then \{2\} vs. \{3\}), or keep them as they are and leverage the affinities based on higher-order transfer where, rather than accounting for a unique target concept, the representation is then fine-tuned.
Fig.~\ref{fig:high-order-transfer} illustrates how transfers are performed between non-leaf nodes models.
We index the models with the encoder $\mathcal{M}_{[\mathcal{Z}_i]}$. In the case of higher-order transfer, the models are indexed using all concepts involved in the transfer, i.e. $\mathcal{M}_{[\mathcal{Z}_{i,j, \dots}]}$.

\begin{figure}[h!]
\captionsetup[figure]{font=small}
\centering
\sffamily
\subfloat[]{
    \def\svgwidth{.25\columnwidth}
    \resizebox{27mm}{!}{
       \input{./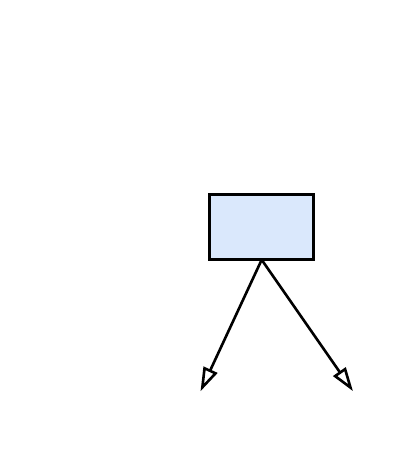_tex}
    }
    \label{fig:high-order-transfer}
}%
\subfloat[]{
    \def\svgwidth{.2\columnwidth}
    \resizebox{26mm}{!}{
\begingroup%
  \makeatletter%
  \providecommand\color[2][]{%
    \errmessage{(Inkscape) Color is used for the text in Inkscape, but the package 'color.sty' is not loaded}%
    \renewcommand\color[2][]{}%
  }%
  \providecommand\transparent[1]{%
    \errmessage{(Inkscape) Transparency is used (non-zero) for the text in Inkscape, but the package 'transparent.sty' is not loaded}%
    \renewcommand\transparent[1]{}%
  }%
  \providecommand\rotatebox[2]{#2}%
  \newcommand*\fsize{\dimexpr\f@size pt\relax}%
  \newcommand*\lineheight[1]{\fontsize{\fsize}{#1\fsize}\selectfont}%
  \ifx\svgwidth\undefined%
    \setlength{\unitlength}{110.81043636bp}%
    \ifx\svgscale\undefined%
      \relax%
    \else%
      \setlength{\unitlength}{\unitlength * \real{\svgscale}}%
    \fi%
  \else%
    \setlength{\unitlength}{\svgwidth}%
  \fi%
  \global\let\svgwidth\undefined%
  \global\let\svgscale\undefined%
  \makeatother%
  \begin{picture}(1,1.21998888)%
    \lineheight{1}%
    \setlength\tabcolsep{0pt}%
    \put(0,0){\includegraphics[width=\unitlength,page=1]{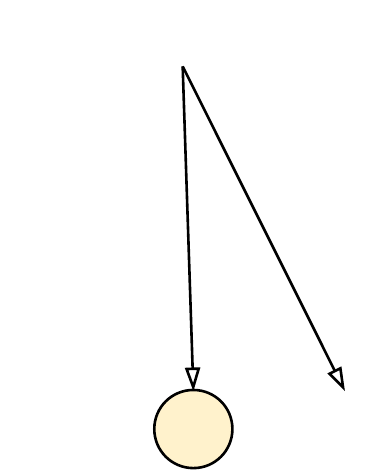}}%
    \put(0.49914465,0.07445147){\makebox(0,0)[t]{\lineheight{1.25}\smash{\begin{tabular}[t]{c}$c_j$\end{tabular}}}}%
    \put(0,0){\includegraphics[width=\unitlength,page=2]{high-order-transfer-fusion.pdf}}%
    \put(0.89170695,0.07445147){\makebox(0,0)[t]{\lineheight{1.25}\smash{\begin{tabular}[t]{c}$c_k$\end{tabular}}}}%
    \put(0,0){\includegraphics[width=\unitlength,page=3]{high-order-transfer-fusion.pdf}}%
    \put(0.47139457,1.10154336){\makebox(0,0)[t]{\lineheight{1.25}\smash{\begin{tabular}[t]{c}$\mathcal{M}_{[z_{i, j,k}]}$\end{tabular}}}}%
    \put(0,0){\includegraphics[width=\unitlength,page=4]{high-order-transfer-fusion.pdf}}%
    \put(0.16072887,0.4805504){\makebox(0,0)[t]{\lineheight{1.25}\smash{\begin{tabular}[t]{c}$c_i$\end{tabular}}}}%
    \put(0,0){\includegraphics[width=\unitlength,page=5]{high-order-transfer-fusion.pdf}}%
  \end{picture}%
\endgroup%

    }
    \label{fig:high-order-transfer-fusion}
}%
\subfloat[]{
    \def\svgwidth{0.3\columnwidth}
    \resizebox{30mm}{!}{
       \input{./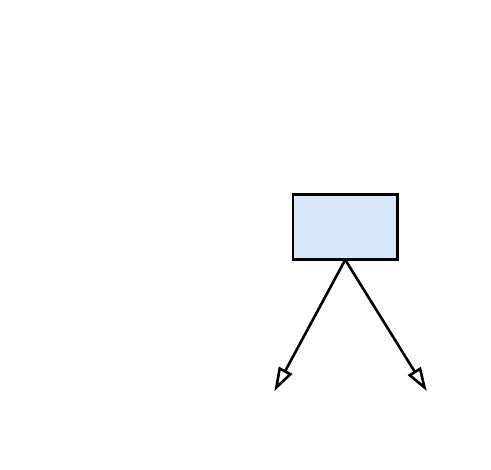_tex}
    }
    \label{fig:high-order-transfer-non-leaf}
}%
\caption{
\small
Transfers are performed between non-leaf nodes models.
The hierarchy in (a) can be kept as they are merged to form the hierarchy in (b). (b): a high-order transfer between the concepts $c_i, c_j$, and $c_k$ is performed. (c): no transfers can be made.
}
\label{fig:high-order-transfer}
\end{figure}

\paragraph{Adjusting models weights}
Classifiers are trained to output a hypothesis based on the most appropriate representations learned earlier.
Given the encoder (representation) assigned to any non-leaf node $t$, we select a classifier $\hat{f} := \operatorname*{argmin}_{f\in\mathcal{H}}$ $\hat{R}(f, \mathcal{Z}_t)$ where $\hat{R}(f, \mathcal{Z}_t) := \frac{1}{M} \sum_{x,c\sim X,C|c\in Child(t)} \mathbb{E}_{z \sim \mathcal{Z}_t|x} [\mathcal{L}(c, f(z))]$ and $\mathcal{H}$ is the hypothesis space.
Models are adjusted to account for local errors as well as for global errors related to the hierarchy as a whole. In the first case, the loss is defined as the traditional hinge loss used in SVMs which is intended to adjust the weights of the classifiers that have only children leaves.
In the second case, we use a loss that encourages the models to leverage orthogonal representations (between children and parent nodes)~\cite{zhou2011hierarchical}.

\section{Experiments and Results}
Empirical evaluation of our approach are performed on three steps: we evaluate classification performances in the hierarchical setting (Sec.~\ref{sec:evaluation:hierarchical-classification-performances}); then, we evaluate the transfer-based affinity analysis step and the properties related to the separability of the considered concepts (Sec.~\ref{sec:evaluation:affinity-analysis});
finally, we evaluate the derived hierarchies in terms of stability, performance, and agreement with their counterparts defined by domain experts (Sec.~\ref{sec:evaluation:hierarchy-derivation})~\footnote{Software package and code to reproduce empirical results are publicly available at \url{https://github.com/sensor-rich/hierarchicalSHL}}.
Training details can be found in~\ref{sec:training-details} and evaluation metrics are detailed in~\ref{sec:evaluation-metrics}.

\paragraph{SHL dataset \cite{gjoreski2018university}}
\label{sec:dataset-description}
It is a highly versatile and precisely annotated dataset dedicated to mobility-related human activity recognition.
In contrast to related representative datasets like~\cite{carpineti2018custom}, the SHL dataset (26.43 GB) provides , simultaneously, multimodal and multilocation locomotion data recorded in real-life settings.
Among the 16 modalities of the original dataset, we select the body-motion modalities including: \textit{accelerometer}, \textit{gyroscope}, \textit{magnetometer}, \textit{linear acceleration}, \textit{orientation}, \textit{gravity}, and \textit{ambient pressure}.
This makes the data set suitable for a wide range of applications and in particular transportation recognition concerned with this paper. From the 8 primary categories of transportation, we are selected: {\it 1:Still}, {\it 2:Walk}, {\it 3:Run}, {\it 4:Bike}, {\it 5:Car}, {\it 6:Bus}, {\it 7:Train}, and {\it 8:Subway (Tube)}.

\section{Training Details}\label{sec:training-details}
We use Tensorflow for building the encoders/decoders. We construct encoders by stacking Conv1d/ReLU/MaxPool blocks.
These blocks are followed by a Fully Connected/ReLU layers.
Encoders performance estimation is based on the validation loss and is framed as a sequence classification problem.
As a preprocessing step, annotated input streams from the huge SHL dataset are segmented into sequences of 6000 samples which correspond to a duration of 1 min. given a sampling rate of 100 Hz.
For weight optimization, we use stochastic gradient descent  with  Nesterov  momentum  of  0.9  and a learning-rate of 0.1 for a minimum of 12 epochs (we stop training if there is no improvement).
Weight decay is set to 0.0001.
Furthermore, to make the neural networks more stable, we use batch normalization on top of each convolutional layer. We use SVMs as our ERMs in the derived hierarchies.

\section{Evaluation Metrics}\label{sec:evaluation-metrics}
In hierarchical classification settings, the hierarchical structure is important and should be taken into account during model evaluation~\cite{silla2011survey}.
Various measures that account for the hierarchical structure of the learning process have been studied in the literature.
They can be categorized into: distance-based; depth-dependent; semantics-based; and hierarchy-based measures. Each one is displaying advantages and disadvantages depending on the characteristics of the considered structure~\cite{costa2007review}.
In our experiments, we use the \textit{H-loss}, a hierarchy-based measure defined in~\cite{cesa2006incremental}. This measure captures the intuition that \textit{"whenever a classification mistake is made on a node of the taxonomy, then no loss should be charged for any additional mistake occurring in the sub-tree of that node."}
$\ell_H(\hat{y}, y) = \sum_{i=1}^{N} \{ \hat{y}_i \neq y_i \land \hat{y}_j = y_j, j \in Anc(i) \}$,
where $\hat{y} = (\hat{y}_1, \cdots \hat{y}_N)$ is the predicted labels, $y = (y_1, \cdots y_N)$ is the true labels, and $Anc(i)$ is the set of ancestors for the node $i$.

\subsection{Evaluation of the Hierarchical Classification Performances}\label{sec:evaluation:hierarchical-classification-performances}
In these experiments, we evaluate the flat classification setting using neural networks which constitute our baseline for the rest of the empirical evaluations.
To compare our baseline with the hierarchical models, we make sure to get the same complexity, i.e. comparable number of parameters as the largest hierarchies including the weights of the encoders and those of the ERMs.
We also use Bayesian optimization based on Gaussian processes as surrogate models to select the optimal hyperparameters of the baseline model~\cite{snoek2012practical,hamidi2020}.
More details about the baseline and its hyperparameters are available in the code repository~\cite{hamidi2020}.

\paragraph{Per-node performances}
Fig.~\ref{fig:per-node-accuracy-improvement} shows the resulting per-node performances, i.e. how accurately the models associated with the non-leaf nodes can predict the correct subcategory averaged over the entire derived hierarchies.
The nodes are ranked according to the obtained per-node performance (top 10 nodes are shown) and accompanied by their appearance frequency.
It is worth noticing that the concept 1:{\it still} learned alone against the rest of the concepts (first bar) achieves the highest gains in terms of recognition performances while the appearance frequency of this learning configuration is high (more than 60 times).
We see also that the concepts 4:{\it bike}, 5:{\it car}, and 6:{\it bus} grouped together (5th bar) occur very often in the derived hierarchies (80 times) which is accompanied by fairly significant performance gains ($5.09 \pm 0.3$\%).
At the same time, as expected, we see that the appearance frequency gets into a plateau starting from the 6th bar (which lasts after the 10th bar).
This suggests that the most influential nodes are often exhibited by our approach.
\begin{figure}
    \centering
    \sffamily
    \subfloat[]{
    \def\svgwidth{1.0\columnwidth}
    \resizebox{75mm}{!}{
       \input{./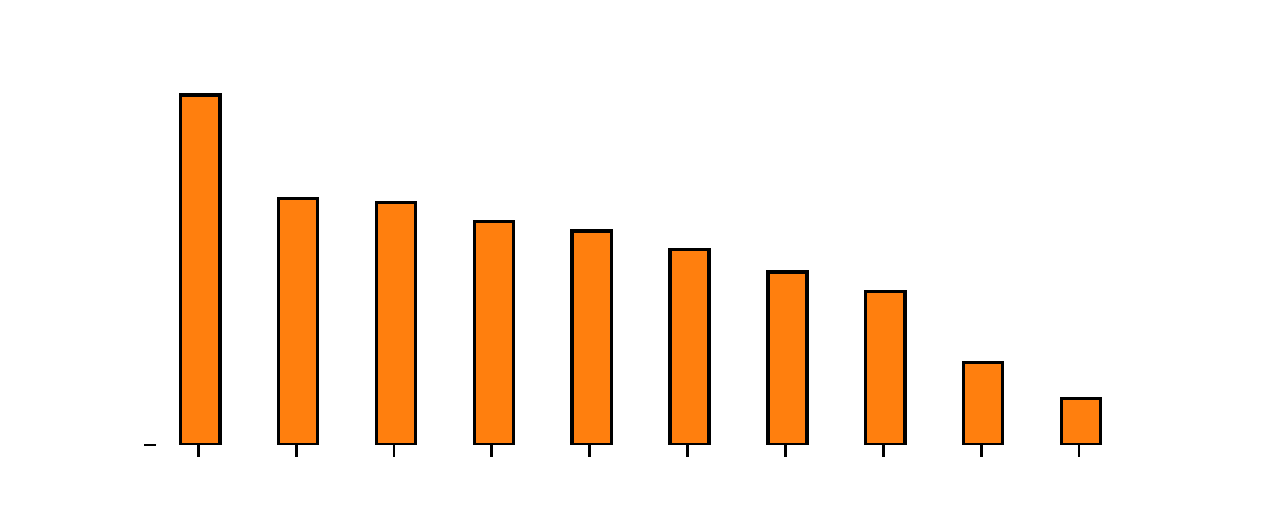_tex}
    }
    }
    
    \subfloat[]{
    \def\svgwidth{.99\columnwidth}
    \resizebox{75mm}{!}{
       \input{./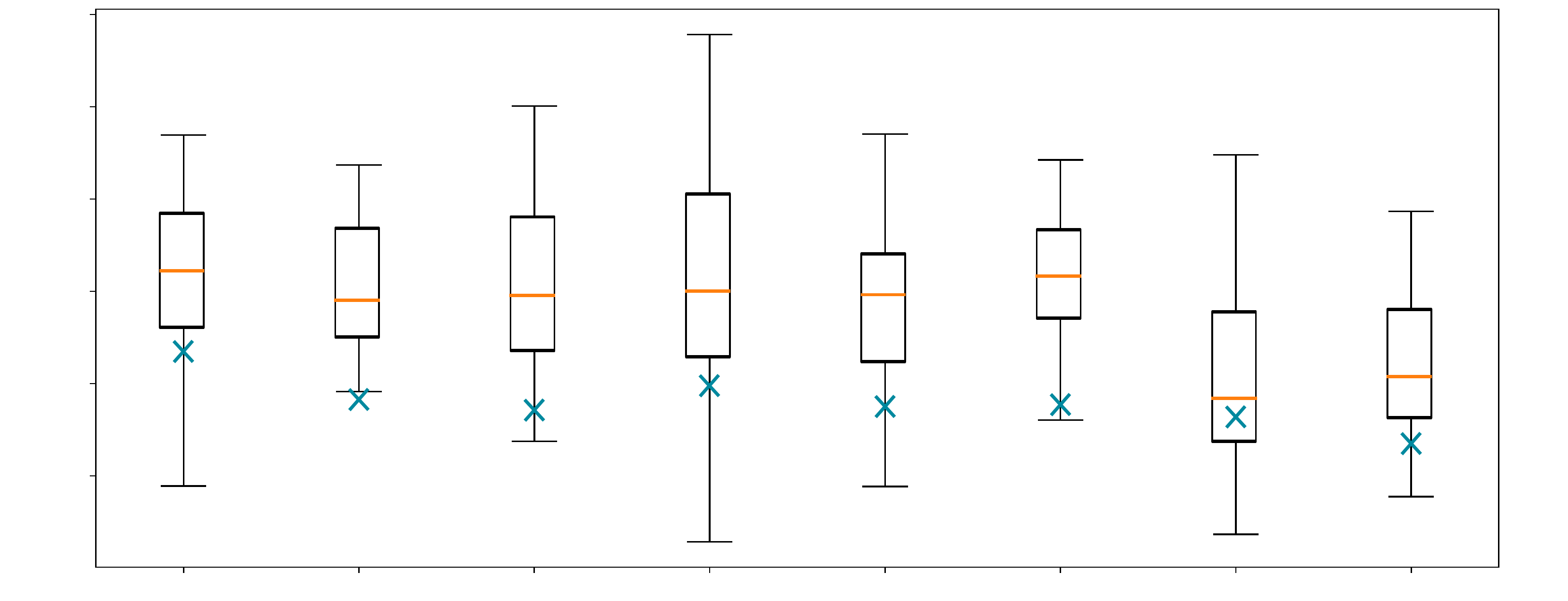_tex}
    }
    }
    \caption{
        (a) Per-node performance gains, averaged over the entire derived architectures (similar nodes are grouped and their performances are averaged).
        The appearance frequency of the nodes is also illustrated.
        Each bar represents the gained accuracy of each node in our hierarchical approach. For example, the 8th bar corresponds to the concepts 2:{\it walk}-3:{\it run}-4:{\it bike} grouped together.
        (b) Recognition performances of each individual concept, averaged over the entire derived hierarchies.
        For reference, the recognition performances of the baseline model are also shown.
    }
    \label{fig:per-node-accuracy-improvement}
\end{figure}
\paragraph{Per-concept performances}
We further ensure that the performance improvements we get at the node levels are reflected at the concept level. Experimental results show the recognition performances of each concept, averaged over the whole hierarchies derived using our proposed approach.
We indeed observe that there are significant improvements for each individual concept over the baseline (flat classification setting).
We observe that again 1:{\it still} has the highest classification rate ($72.32 \pm 3.45$\%) and an improvement of 5 points over the baseline.
Concept 6:{\it bus} also exhibits a roughly similar trend.
On the other hand, concept 7:{\it train} has the least gains ($64.43 \pm 4.45$\%) with no significant improvement over the baseline.
Concept 8:{\it subway} exhibits the same behavior suggesting that there are undesirable effects that stem from the definition of these two concepts.

\subsection{Evaluation of the Affinity Analysis Stage}
\label{sec:evaluation:affinity-analysis}
These experiments evaluate the proposed transfer-based affinity measure. We assess, the separability of the concepts depending on their similarity score (for both the transfer-affinity and supervision budget) and the learned representation.

\paragraph{Appropriateness of the Transfer-based Affinity Measure}
We reviewed above the nice properties of the transfer-based measure especially the universality and stability of the resulting affinity structure. The question that arises is related to the separability of the concepts that are grouped together. Are the obtained representations, are optimal for the final ERMs used for the classification?
This is what we investigate here.
Fig.~\ref{fig:high-affinity-score-decision-boundary} shows the decision boundaries generated by the considered ERMs which are provided with the learned representations of two concepts. The first case (top right), exhibits a low-affinity score, and the second case (bottom right) shows a high-affinity score.
In the first case, the boundaries are unable to separate the two concepts while it gets a fairly distinct frontier.
\begin{figure}
\centering
\sffamily
\subfloat[]{
    \centering
    \def\svgwidth{0.3\columnwidth}
    \resizebox{28mm}{!}{
\begingroup%
  \makeatletter%
  \providecommand\color[2][]{%
    \errmessage{(Inkscape) Color is used for the text in Inkscape, but the package 'color.sty' is not loaded}%
    \renewcommand\color[2][]{}%
  }%
  \providecommand\transparent[1]{%
    \errmessage{(Inkscape) Transparency is used (non-zero) for the text in Inkscape, but the package 'transparent.sty' is not loaded}%
    \renewcommand\transparent[1]{}%
  }%
  \providecommand\rotatebox[2]{#2}%
  \newcommand*\fsize{\dimexpr\f@size pt\relax}%
  \newcommand*\lineheight[1]{\fontsize{\fsize}{#1\fsize}\selectfont}%
  \ifx\svgwidth\undefined%
    \setlength{\unitlength}{84.40720313bp}%
    \ifx\svgscale\undefined%
      \relax%
    \else%
      \setlength{\unitlength}{\unitlength * \real{\svgscale}}%
    \fi%
  \else%
    \setlength{\unitlength}{\svgwidth}%
  \fi%
  \global\let\svgwidth\undefined%
  \global\let\svgscale\undefined%
  \makeatother%
  \begin{picture}(1,1.17732843)%
    \lineheight{1}%
    \setlength\tabcolsep{0pt}%
    \put(0,0){\includegraphics[width=\unitlength,page=1]{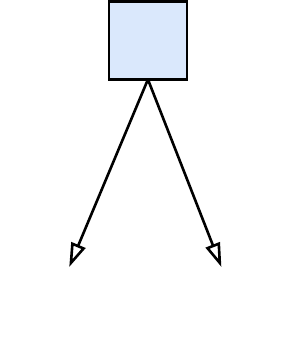}}%
    \put(0.50035403,0.99961848){\makebox(0,0)[t]{\lineheight{1.25}\smash{\begin{tabular}[t]{c}$\mathcal{M}_{[z_i]}$\end{tabular}}}}%
    \put(0,0){\includegraphics[width=\unitlength,page=2]{high-affinity-score-model.pdf}}%
    \put(0.2337891,0.09774047){\makebox(0,0)[t]{\lineheight{1.25}\smash{\begin{tabular}[t]{c}$c_i$\end{tabular}}}}%
    \put(0,0){\includegraphics[width=\unitlength,page=3]{high-affinity-score-model.pdf}}%
    \put(0.74914796,0.09774047){\makebox(0,0)[t]{\lineheight{1.25}\smash{\begin{tabular}[t]{c}$c_j$\end{tabular}}}}%
  \end{picture}%
\endgroup%

    }
}%
\subfloat[]{
    \def\svgwidth{0.2\columnwidth}
    \resizebox{38mm}{!}{
       \input{./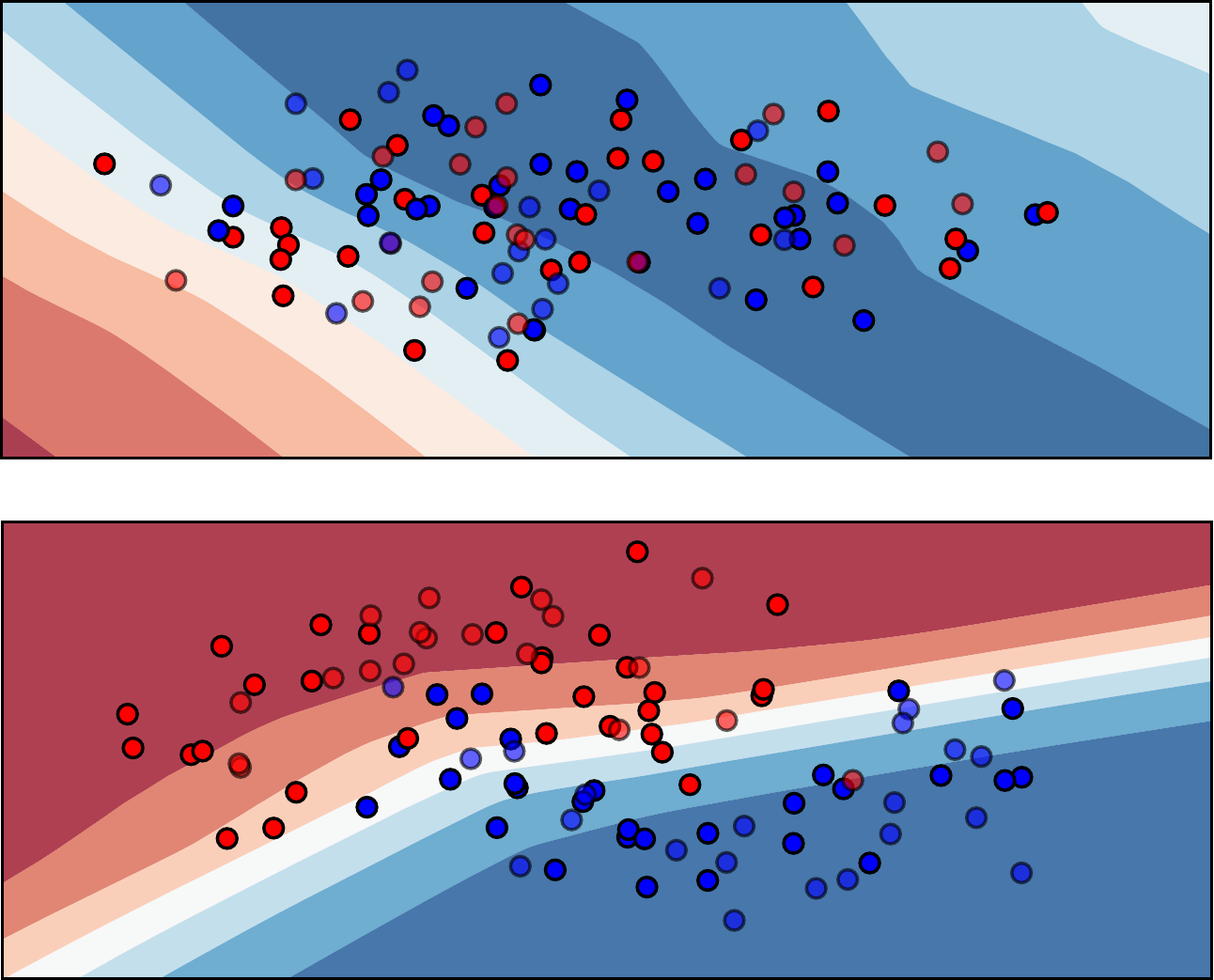_tex}
    }
    \label{fig:high-affinity-score-decision-boundary}
}%

\subfloat[]{
    \centering
    \def\svgwidth{1.\columnwidth}
    \resizebox{80mm}{!}{
       \input{./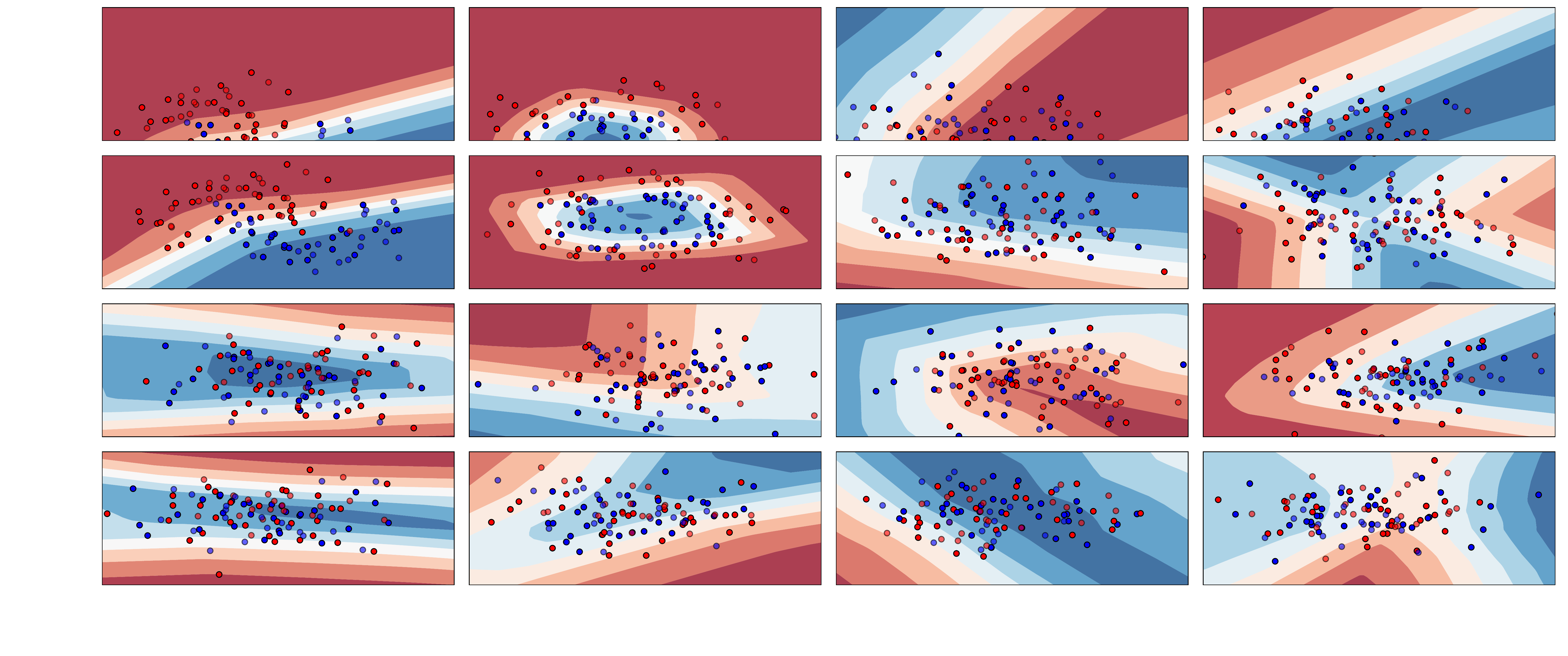_tex}
    }
     \label{fig:decision-boundaries}
}
\caption{
(a) Non-leaf node grouping concepts $c_i$ and $c_j$. (b) Decision boundaries generated by the ERM of the non-leaf node using an encoder (representation) fine-tuned to account for (top) the case where $c_i$ and $c_j$ are dissimilar (low-affinity score) and (bottom) the case where $c_i$ and $c_j$ are similar (high-affinity score).
(c) Decision boundaries obtained by SVM-based classifiers trained on the representations $\mathcal{Z}_t$ as a function of the distance between the concepts (y-axis) and the supervision budget (x-axis).
}
\label{fig:appropriateness-of-transfer-based-affinity-score-as-a-measure}
\end{figure}

\paragraph{Impact on the ERMs' Decision Boundaries}
We train different models with various learned representations in order to investigate the effect of the initial affinities (obtained solely with a set of 100 learning examples) and the supervision budget (additional learning examples used to fine-tune the obtained representation) on the classification performances of the ERMs associated with the non-leaf nodes of our hierarchies.
Fig.~\ref{fig:decision-boundaries} shows the decision boundaries generated by various models as a function of the distance between the concepts (y-axis) and the supervision budget (x-axis).
Increasing the supervision budget to some larger extents (more than $\sim 300$ examples) results in a substantial decrease in classification performances of the ERMs.
This suggests that, although our initial affinity scores are decisive (e.g. 0.8), the supervision budget is tightly linked to generalization. This shows that a trade-off (controlled by the supervision budget) between separability and initial affinities arises when we seek to group concepts together.
In other words, the important question is whether to increase the supervision budget indefinitely (in the limits of available learning examples) in order to find the most appropriate concepts to fuse with, while expecting good separability.

\subsection{Universality and Stability}
\label{sec:evaluation:hierarchy-derivation}
We demonstrated in the previous section the appropriateness of the transfer-based affinity measure to provide distance between concepts as well as the existence of a trade-off between concepts separability and their initial affinities.
Here we evaluate the \textbf{universality} of the derived hierarchies as well as their \textbf{stability} during adaptation with respect to our hyperparameters (affinity threshold and supervision budget).
We compare the derived hierarchies with their domain experts-defined counterparts, as well as those obtained via a random sampling process.
Fig.~\ref{fig:hierarchies} shows some of the hierarchies defined by the domain experts (first row) and sampled using the random sampling process.
For example, the hierarchy depicted in Fig.~\ref{fig:hierarchy-4} corresponds to a split between static (1:{\it still}, 5:{\it car}, 6:{\it bus}, 7:{\it train}, 8:{\it subway}) and dynamic (2:{\it walk}, 3:{\it run}, 4:{\it bike}) activities.
The difference between the hierarchies depicted in Fig.~\ref{fig:hierarchy-1} and~\ref{fig:hierarchy-2} is related to 4:{\it bike} activity which is linked first to 2:{\it walk} and 3:{\it run} then to 5:{\it car} and 6:{\it bus}.
A possible interpretation is that in the first case, biking is considered as ``on feet" activity while in the second case as ``on wheels" activity.
What we observed is that the derived hierarchies tend to converge towards the expert-defined ones.
\begin{figure}[h!]
\centering
\sffamily
\subfloat[]{
    \def\svgwidth{1.1\columnwidth}
    \resizebox{25mm}{!}{
       \input{./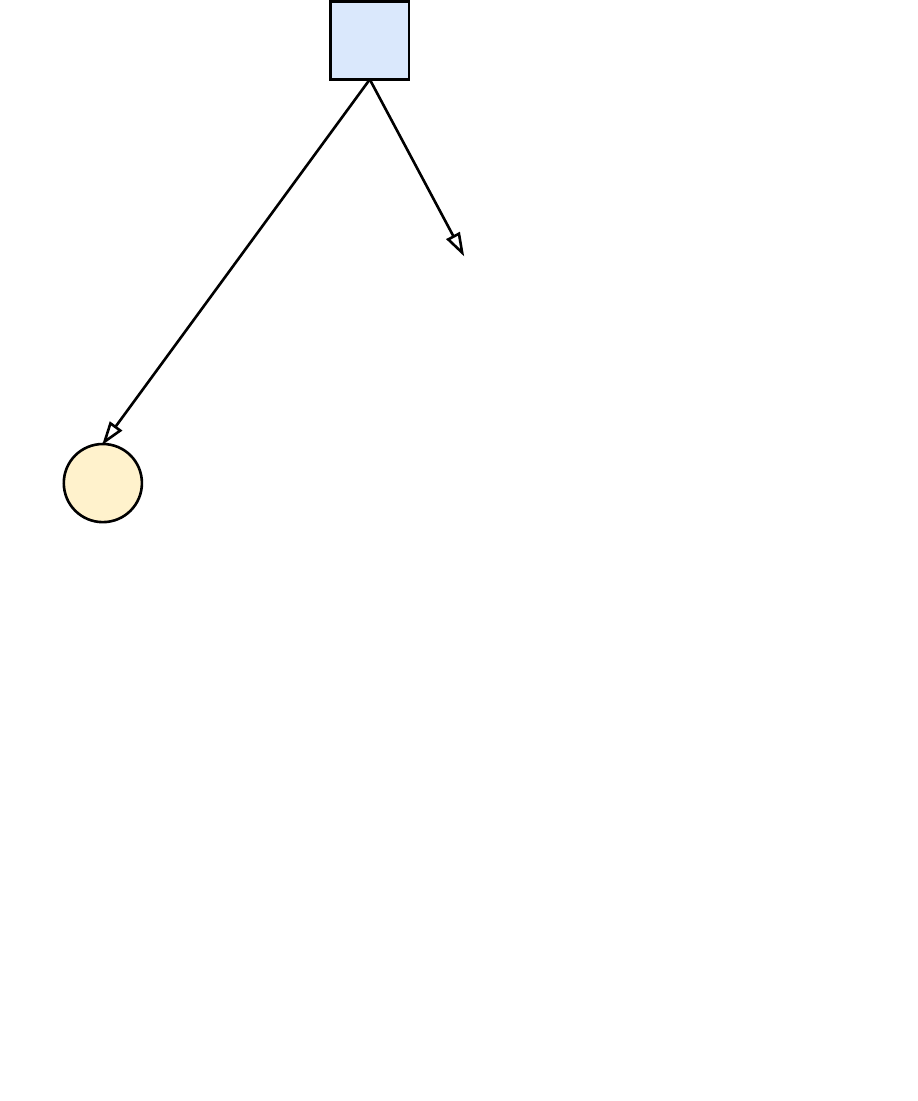_tex}
    }
    \label{fig:hierarchy-1}
}%
\subfloat[]{
    \def\svgwidth{1.1\columnwidth}
    \resizebox{25mm}{!}{
       \input{./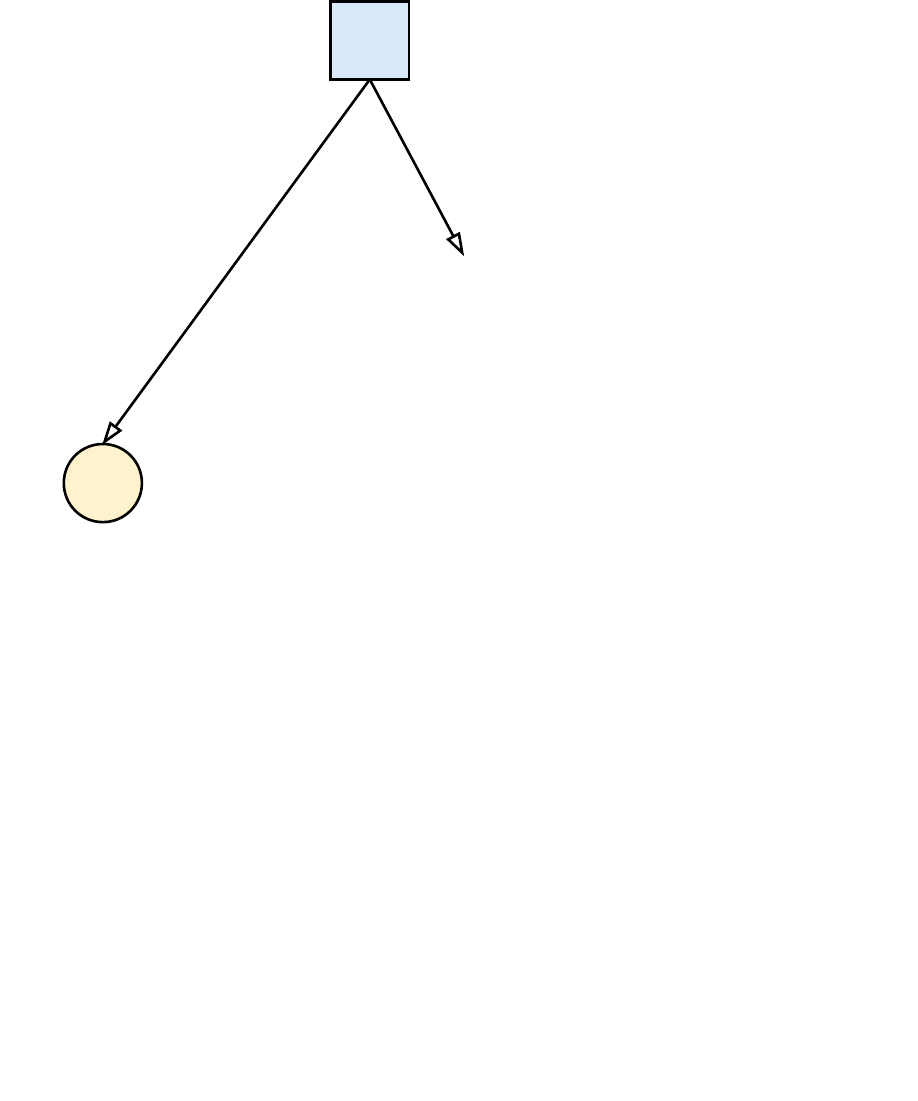_tex}
    }
    \label{fig:hierarchy-2}
}%
\subfloat[]{
    \def\svgwidth{1.1\columnwidth}
    \resizebox{25mm}{!}{
       \input{./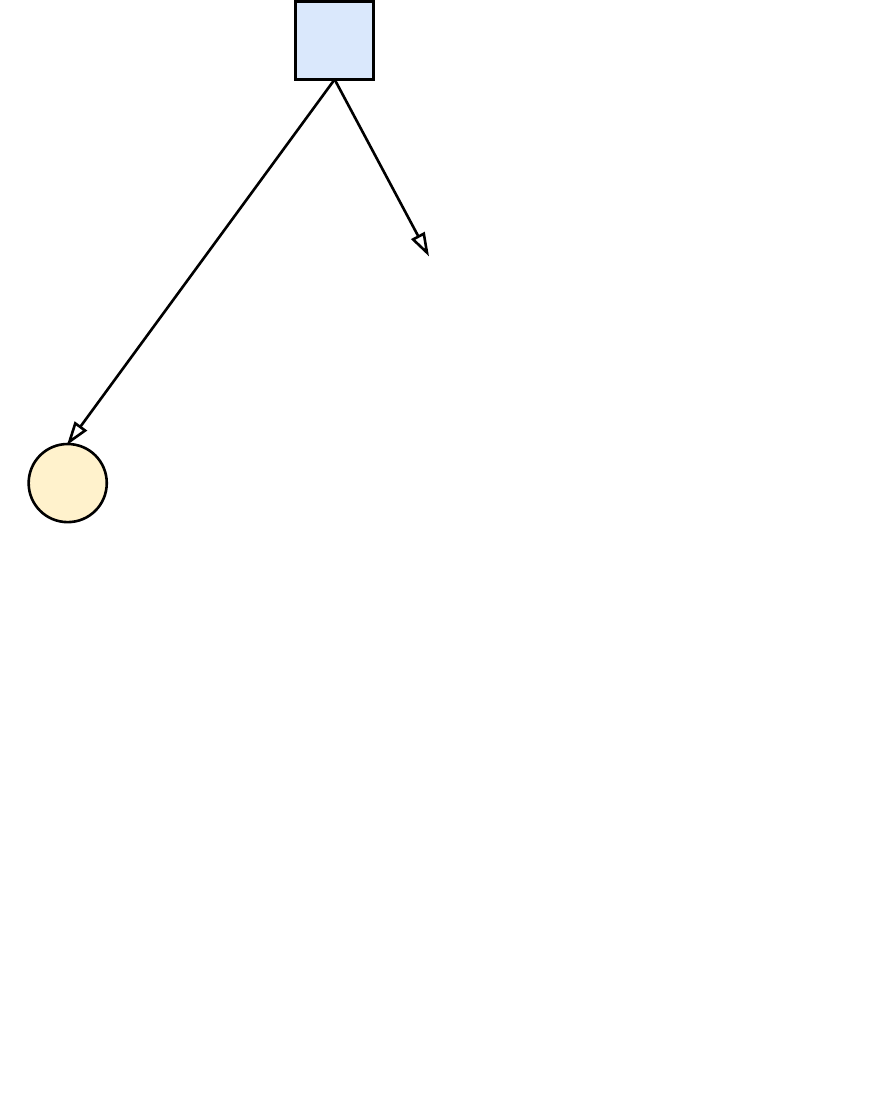_tex}
    }
    \label{fig:hierarchy-3}
}%
\subfloat[]{
    \def\svgwidth{1.1\columnwidth}
    \resizebox{25mm}{!}{
       \input{./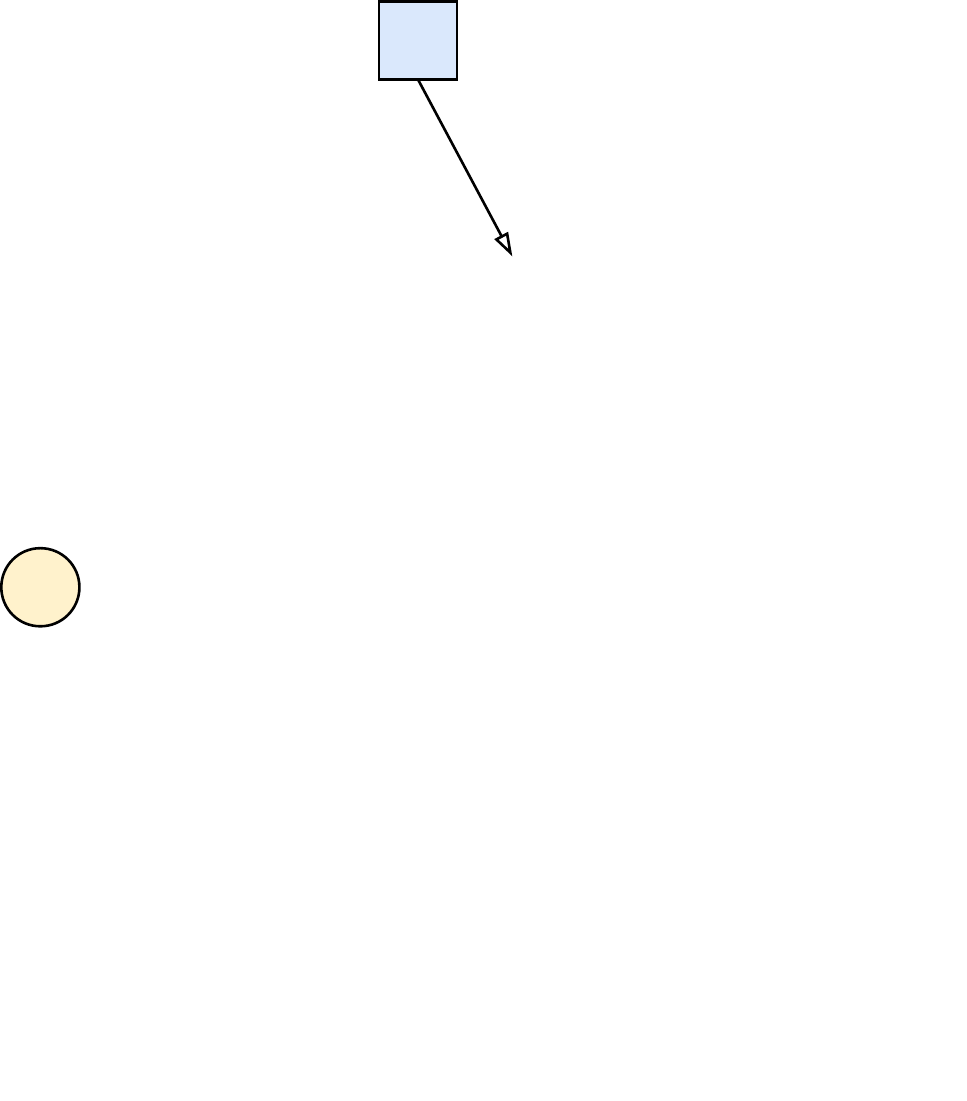_tex}
    }
    \label{fig:hierarchy-4}
}%
\caption{
Examples of hierarchies:
(a) defined via domain expertise,
(b-c) derived using our approach, and
(d) randomly sampled.
Concepts 1---8 from left to right.
}
\label{fig:hierarchies}
\end{figure}

\begin{table}[h!]
    \begin{tabular}{lcc}
    \toprule
        Method & Agree. & perf. avg.$\pm$ std. \\
    \midrule
        Expertise   & -     & 72.32$\pm$0.17 \\
        Random      & 0.32  & 48.17$\pm$5.76 \\
        Proposed    & 0.77  & 75.92$\pm$1.13 \\
    \bottomrule
    \end{tabular}
    \caption{
        Summary of the recognition performances obtained with our proposed approach compared to randomly sampled and expert-defined hierarchies.
    }
    \label{tab:my_label}
\end{table}

We compare the derived hierarchies in terms of their level of agreement. We use for this assessment, the Cohen's kappa coefficient~\cite{cohen1960coefficient} which measures the agreement between two raters.
The first column of Table~\ref{tab:my_label} provides the obtained coefficients.
We also compare the average recognition performance of the derived hierarchies (second column of Table~\ref{tab:my_label}).
In terms of stability, as we vary the design choices (hyperparameters), defined in our approach, we found that the affinity threshold has a substantial impact on our results with many adjustments involved (12 hierarchy adjustments on avg.) whereas the supervision budget has a slight effect, which confirms the observations in Sec.~\ref{sec:evaluation:affinity-analysis}.

\section{Conclusion and Future Work}\label{sec:conclusion-and-future-work}
This paper proposes an approach for organizing the learning process of dependent concepts in the case of human activity recognition.
We first determine a suitable structure for the concepts according to a transfer affinity-based measure.
We then characterize optimal representations and classifiers which are then refined to account for both local and global errors.
We provide theoretical bounds for the problem and empirically show that using our approach we are able to improve the performances and robustness of activity recognition models over a flat classification baseline.
In addition to supporting the necessity of organizing concepts learning, our experiments raise interesting questions for future work.
Noticeably, Sec.~\ref{sec:evaluation:affinity-analysis} asks what is the optimal amount of supervision for deriving the hierarchies. Another future work is to study different approaches for searching and exploring the search space of different hierarchical types (lattices,~etc.).


\end{document}